\newsavebox{\measurebox}
\theoremstyle{plain}
\newtheorem{theorem}{Theorem}[section]
\newtheorem*{theorem*}{Theorem}
\newtheorem{proposition}[theorem]{Proposition}
\newtheorem{corollary}[theorem]{Corollary}
\theoremstyle{definition}
\newtheorem{example}[theorem]{Example}
\newtheorem{definition}[theorem]{Definition}
\theoremstyle{remark}
\numberwithin{equation}{section}
\numberwithin{algorithm}{section}
\numberwithin{figure}{section}
\numberwithin{table}{section}
\numberwithin{theorem}{section}
\newcolumntype{P}[1]{>{\centering\arraybackslash}p{#1}}
\title{Extending the step-size restriction for gradient descent to avoid strict saddle points}
\author[1]{Hayden Schaeffer}
\author[2]{Scott G. McCalla}
\affil[1]{ Department of Mathematical Sciences\\
  Carnegie Mellon University\\
  Pittsburgh, PA, USA  }
\affil[2]{ Department of Mathematical Sciences\\
   Montana State University\\
Bozeman, MT, USA}
\begin{document}
\date{~}
\maketitle

\begin{abstract}
We provide larger step-size restrictions for which gradient descent based algorithms (almost surely) avoid strict saddle points. In particular, consider a twice differentiable (non-convex) objective function whose gradient has Lipschitz constant $L$ and whose Hessian is well-behaved. We prove that the probability of initial conditions for gradient descent with step-size up to $2/L$ converging to a strict saddle point, given one uniformly random initialization, is zero. This extends previous results up to the sharp limit imposed by the convex case. In addition, the arguments hold in the case when a learning rate schedule is given, with either a continuous decaying rate or a piece-wise constant schedule. 
\end{abstract}

\section{Introduction}

Gradient descent based methods are among the main algorithms for optimizing models throughout machine learning. As many learning models are non-convex, their energy landscapes may consist of spurious local minima and saddles; this may lead algorithms to learn models that do not generalize well to new data \cite{keskar2016large}. In \cite{pascanu2014saddle}, it was argued  that in high-dimensional optimization, saddle points are more problematic than local minima. It is easy to construct  examples for which gradient descent converges to saddle points given certain initialization \cite{nesterov2013introductory,lee2016gradient}. However, when the step-size is sufficiently small and the saddles are \textit{strict}, i.e. the Hessian has at least one negative eigenvalue, the gradient descent method is unlikely to converge to a saddle \cite{lee2016gradient}. On the other hand, it is still possible that gradient descent will take exponential time to escape \cite{du2017gradient}.
The strict saddle condition appears in many applications, for example, orthogonal tensor decomposition \cite{ge2015escaping}, low-rank matrix recovery \cite{bhojanapalli2016global,ge2017no,ge2016matrix}, dictionary learning \cite{sun2016complete, sun2017complete}, generalized phase retrieval \cite{sun2018geometric}, and neural networks \cite{soltanolkotabi2019theoretical}. 

First-order gradient descent based methods can avoid or escape saddles when unbiased noise is added to the system. In \cite{pemantle1990nonconvergence}, the authors prove that the Robbins--Monro stochastic approximation converges to local minima in the presence of strict saddles. For objective functions with strict saddle, \cite{ge2015escaping} provided quantitative convergence rates to local minima for the noisy gradient descent method. Convergence of the normalized gradient descent with noise injection was shown in \cite{levy2016power}.

Alternatively, deterministic methods which use second-order information or trust-regions \cite{conn2000trust} can be used to avoid strict saddles. Some examples of such methods include: the modified Cholesky factorization \cite{gill1974newton}, the modified Newton's method using negative curvature \cite{more1979use}, the cubic-regularized Newton's method \cite{nesterov2006cubic}, saddle-free Newton's method for deep learning \cite{dauphin2014identifying,pascanu2014saddle}, algorithms for higher-order saddles \cite{anandkumar2016efficient}, and more recently, trust-region approaches in \cite{sun2016complete, sun2017complete,sun2018geometric}. 

One issue with `second-order' approaches is the need for higher-order information that leads to polynomial (in dimension) complexity per-iteration. For machine learning problems, which are typically of very high-dimension, this complexity can be prohibitive. Some recent approaches  \cite{reddi2017generic,royer2018complexity, carmon2018accelerated} were proposed to lower the per-iteration complexity  of second-order methods while converging to second-order stationary points (see \cite{carmon2018accelerated}). In   \cite{jin2017escape}, the authors propose a perturbed gradient descent method which converges to the second-order condition with a poly-logarithmic cost.

\textbf{Contributions of this work.}
The recent work of \cite{lee2016gradient,panageas2016gradient, lee2017first} showed that, under various conditions,  the gradient descent algorithm will avoid strict saddle points (without the need for additional hyper-parameters or higher-order information). The main technique is to show that the attracting set of a strict saddle has zero measure by invoking the stable manifold theorem applied to the discrete dynamical system generated by the gradient descent method for a $C^2$ non-convex objective function $f$ with step-size $\alpha>0$. In \cite{lee2016gradient}, they proved that gradient descent avoids strict saddles if the gradient of the objective function has Lipschitz constant $L$ (globally), isolated saddle points, and $\alpha < 1/L$. In \cite{lee2017first}, it was shown that many first-order methods will avoid strict saddles under these conditions.  Accelerated methods, such as the heavy-ball method, also avoid strict saddles as shown in \cite{o2017behavior}.

The results still hold with weaker conditions. In particular, \cite{panageas2016gradient} showed that a non-global Lipschitz constant $L$ (in a convex forward invariant set) and $\alpha < 1/L$ were enough.  If the objective function is coercive, then the sublevel sets are compact and $L$ does not have to be global; however, the results of \cite{panageas2016gradient} hold more generally. They also showed that over the set of all local minima $C$, if $$0<\gamma<\inf\limits_{x \in C} ||\nabla f(x)||_2<\infty,$$ then $\alpha<2/\gamma$ is a necessary condition for gradient descent to converge to a local minima.

There are still several open questions, in particular, if the step-size restriction $\alpha< 1/L$ is necessary for avoiding strict saddles and if varying step-sizes effects these results \cite{lee2016gradient, lee2017first}. In this work, we show that if the set of points that obtain the Lipschitz constant is measure zero, then the bound can be extended to $\alpha=1/L$. Furthermore, a step-size of $\alpha < 2/L$ is possible if $\alpha^{-1}$ is not equal to an eigenvalue of the Hessian outside of a null-set.  Examples highlight the need for such conditions. In addition, we show that these arguments can apply to gradient descent with a varying step-sizes.

\section{Overview and Examples}

To solve the non-convex optimization problem 
$$\min_{x \in \mathbb{R}^d} f(x)$$
consider the gradient descent method with fixed step-size $\alpha>0$, i.e.:
$$x^{n+1} = x^n - \alpha \, \nabla f(x^n).$$
 The sequence $x^n$ is generated by the iterative map $x^{n+1}=g(x^n)=g^n(x^0)$ where $$g(x):= x - \alpha \, \nabla f(x).$$ Given conditions on $f$ and $\alpha$, the method will converge to a critical point of $f$ (or equivalently a fixed-point of the map $g$) \cite{absil2005convergence}.

\begin{definition}\label{defSS}
Consider a function $f:\mathbb{R}^d \rightarrow \mathbb{R}$ and assume $f \in C^2(\mathbb{R}^d)$. We define the following:
\begin{itemize}
\item A point $x^* \in \mathbb{R}^d$ is a critical point of $f$ if $\nabla f(x^*)=0$.
\item A critical point $x^* \in \mathbb{R}^d$ is a saddle point of $f$ if for all neighborhoods $U(x^*)$ around $x^*$ there exists $x,y \in U(x^*)$ such that  $$f(x) \leq f(x^*) \leq f(y).$$
\item A critical point $x^* \in \mathbb{R}^d$ is a strict saddle point if there is a negative eigenvalue, i.e. $$\lambda_j\left( \nabla^2 f(x^*)\right)<0$$ for some $1\leq j\leq d$. 
\end{itemize}
\end{definition}
\noindent Based on this definition, local maxima are technically strict saddle points. Saddle points like $(0,0)$ of the objective function $x^2-y^3$ are avoided by the definition of strict saddles. 

Define $L$ as the Lipschitz constant of the gradient. If $f\in C^2(\Omega)$, then it is easy to see that 
$$L:=\sup\limits_{x\in \Omega} \, \| \nabla^2 f(x)\|_2.$$
It was shown in \cite{lee2016gradient,panageas2016gradient, lee2017first} that for $\alpha<L^{-1}$, gradient descent avoids strict saddle points. Extending this result to $\alpha\leq L^{-1}$ introduces issues even for smooth objective functions. It is possible for gradient descent to converge to strict saddles if there are non-trivial regions where $g$ degenerates (i.e. $Dg$ is non-invertible). In effect, the gradient flow funnels iterates towards the stable manifold of a strict saddle. To illustrate various issues, we present the following examples. 

\begin{example}\label{example0} (from \cite{nesterov2013introductory,lee2016gradient})
Consider the objective function $$f(x,y) = \frac{1}{2}x^2  +\frac{1}{4}y^4- \frac{1}{2}y^2$$ over $\Omega = \mathbb{R} \times \left(-\sqrt{\frac{11}{3}},\sqrt{\frac{11}{3}}\right)$, which has three critical points $(0,0)$ (strict saddle) and $(0,\pm 1$) (minima). The Hessian is given by:
 \begin{align*}
 \nabla^2 f(x,y) = 
\begin{bmatrix}
1&   0\\
    0 & 3y^2-1
\end{bmatrix},
\end{align*} 
 and achieves its maximum at $y=\pm \sqrt{\frac{11}{3}}$, i.e. $$L=\sup\limits_\Omega\, \| \nabla^2 f(x,y)\| = 10.$$ The gradient descent method with step-size $\alpha =L^{-1}= \frac{1}{10}$ is given by:
 \begin{align*}
\begin{bmatrix} x^{n+1}\\
y^{n+1}\end{bmatrix}
= 
\begin{bmatrix}
    \frac{9}{10} x^n\\
   \frac{11}{10} y^n - \frac{1}{10} (y^n)^3 
\end{bmatrix}.
\end{align*} 
The system is forward invariant over $\Omega$, with $x^n$ converging to $0$ and $y^n$ converging to $\text{sign}(y^0)$ (minima). The sequence will only converge to the strict saddle point $(0,0)$ on the line $(x,0)$, and thus has probability zero if the initial data is sampled uniformly from $\Omega$.
\end{example}

\begin{example}\label{example1}
Consider the objective function: $$f(x,y):=\frac{1}{4} y^2 - q(y) x^2$$ for some region of $\mathbb{R}^2$ containing the origin and let $q\in C^2$.  The gradient is given by:
 \begin{align*}
 \nabla f(x,y) = 
\begin{bmatrix}
    -2q(y)x\\
    \frac{1}{2}y-q'(y)x^2
\end{bmatrix}
\end{align*} 
and the Hessian is given by:
 \begin{align*}
 \nabla^2 f(x,y) = 
\begin{bmatrix}
 -2q(y) &   -2q'(y)x\\
    -2q'(y)x & \frac{1}{2}-q''(y)x^2
\end{bmatrix}.
\end{align*} 
If we define $q$ as a smooth interpolant between $1$ and $-1$ for $y\in (10,30)$, then we can show that even though the critical point at $(0,0)$ is a strict saddle and the flow is invertible near the strict saddle, regions of degeneracy away from the strict saddle can converge to the stable manifold, and thus with some non-zero probability converge to a strict saddle. 

For an explicit example, define $q$ by: 
\begin{align*}
q(y):=
\begin{cases}
1, &\text{ \ if \ } y\leq 10,\\
1-\frac{2}{1+\exp\left(\frac{40(y-20)}{(y-20)^2-100}\right)}, &\text{ \ if \ } y\in(10,30),\\
-1, &\text{ \ if \ } y\geq 30
\end{cases}
\end{align*} 

It is easy to check that the function $q \in C^2$.  In the region $y<10$, we have:
 \begin{align*}
 \nabla^2 f(x,y) = 
\begin{bmatrix}
 -2 &   0\\
    0 & \frac{1}{2}
\end{bmatrix}
\,\mbox{and in the region $y> 30$:}\,\,
 \nabla^2 f(x,y) = 
\begin{bmatrix}
 2 &   0\\
 0 & \frac{1}{2}
\end{bmatrix}.
\end{align*} 
The only critical point is at $(x,y)=(0,0)$ and it is a strict saddle. Note that the Lipschitz constant of $\nabla f$ in some bounded region around the strict saddle that contains $y\geq 30$, restricted to $x$ near  the origin, is given by $L= 2$ and is obtained for all $y\geq 30$ (a set of positive measure). Using gradient descent with $\alpha = L^{-1}=1/2$ yields:
 \begin{align*}
\begin{bmatrix} x^{n+1}\\
y^{n+1}\end{bmatrix}
= 
\begin{bmatrix}
    x^n+q(y^n)x^n\\
   y^n - \frac{1}{4}y^n+\frac{1}{2}q'(y^n)(x^n)^2
\end{bmatrix}.
\end{align*} 
For any initialization in $y\geq 30$, we have
\begin{align*}
\begin{bmatrix} x^{n+1}\\
y^{n+1}\end{bmatrix}
= 
\begin{bmatrix}
    0\\
   \frac{3}{4}y^n
\end{bmatrix}
\end{align*} 
which is within the stable manifold for $(0,0)$ (the iterates are pushed onto the stable manifold after one step). Therefore, given this choice of step-size, with non-zero probability (after restricting onto an appropriate bounded set), gradient descent will converge to a strict saddle.

\end{example}

Example~\ref{example1} shows that large regions of space can be attracted to the local stable manifold of a strict saddle. These domains act as focusing regions, in particular, subsets where the Hessian is degenerate (i.e. at least one zero eigenvalue) can cause the flow to focus a non-zero measure set onto a measure zero stable manifold. This behavior will be taken into account in Theorem~\ref{Thrm1}.

In the next section, we provide qualitative and quantitative results on the convergence of gradient descent, in particular, the divergence from strict saddles when the step-size does not degenerate the Hessian on non-null sets.


\section{Conditions for Avoiding Strict Saddles}

For convex optimization problems with Lipschitz gradients, convergence of the gradient descent method is guaranteed for step-sizes satisfying $\alpha L \leq 1$. It is possible to take larger step-sizes. For example, if $A$ is a symmetric positive definite matrix, then gradient descent with fixed step-size will converge to a minimizer of: $$f(x) = \frac{1}{2}x^T Ax-b^Tx$$ if and only if $\alpha L < 2$. Taking $\alpha L<2$ as a reasonable upper limit for the step-size, our goal is to show that with the time-s restriction and a condition on the size of the degenerate set, gradient descent will not converge to a strict saddles. Note that this does not imply convergence to a minimizer, since non-strict saddles are possible.

The behavior near a critical point can be characterized by the well-known center manifold theorem.

\begin{theorem}
(Center Manifold Theorem \cite{shub2013global}) Let $x^*$ be a fixed point of a $C^1$ local diffeomorphism $g: U \rightarrow \mathbb{R}^d$ where $U$ is a neighborhood of $x^*$ in  $\mathbb{R}^d$. Let $E^s\bigoplus E^c \bigoplus E^u$ be an invariant splitting of $\mathbb{R}^d$ into the generalized eigenspace of $Dg(x^*)$ corresponding to the eigenvalues of absolute value less than one, equal to one and greater than one. Then for each of the invariant subspaces: $E^s$, $E^s\bigoplus E^c$, $ E^c$, $ E^c \bigoplus E^u$, and $E^u$ there is an associated local $g$ invariant $C^1$ embedded disc $W_{\text{loc}}^{s}$, $W_{\text{loc}}^{cs}$, $W_{\text{loc}}^{c}$, $W_{\text{loc}}^{cu}$, and $W_{\text{loc}}^{u}$ tangent to the linear subspace at $x^*$ and a ball $B$ around $x^*$ such that there is a norm with:
\begin{itemize}
\item[(1)] $W_{\text{loc}}^{s}=\{x \in B : g^n(x) \in B$ for all $ n\geq0 $ and $ d(g^n(x),0)\rightarrow 0$ exponentially$\}$. Also, $g:W_{\text{loc}}^{s} \rightarrow W_{\text{loc}}^{s}$ is a contraction map.
\item[(2)] $g(W_{\text{loc}}^{cs}) \cap B \subset W_{\text{loc}}^{cs}$. If $g^n(x)\in B$ for all $n\geq 0$, then $x \in W_{\text{loc}}^{cs}$.
\item[(3)] $g(W_{\text{loc}}^{c}) \cap B \subset W_{\text{loc}}^{c}$. If $g^n(x)\in B$ for all $n\in\mathbb{Z}$, then $x \in W_{\text{loc}}^{c}$.
\item[(4)] $g(W_{\text{loc}}^{cu}) \cap B \subset W_{\text{loc}}^{cu}$. If $g^n(x)\in B$ for all $n\leq 0$, then $x \in W_{\text{loc}}^{cu}$.
\item[(5)] $W_{\text{loc}}^{u}=\{x \in B : g^n(x) \in B $ for all 
$ n\leq0 $ and $ d(g^n(x),0)\rightarrow 0 $ exponentially$\}$. Also, $g^{-1}:W_{\text{loc}}^{u} \rightarrow W_{\text{loc}}^{u}$ is a contraction map.
\end{itemize}
\label{thrm:cmt}
\end{theorem}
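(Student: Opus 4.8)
The statement is quoted verbatim from \cite{shub2013global}; a complete proof is outside the scope of this paper, so what follows is a sketch of the standard argument one would carry out, using the Lyapunov--Perron (variation-of-constants) method.

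I would first \emph{normalize and cut off}. Translate so that $x^* = 0$ and write $g(x) = Ax + \phi(x)$ with $A := Dg(0)$, $\phi \in C^1$, $\phi(0) = 0$, $D\phi(0) = 0$. Since $g$ is a local diffeomorphism, $A$ is invertible, so for any $\rho > 1$ one may equip $\mathbb{R}^d$ with a norm adapted to the splitting $E^s \oplus E^c \oplus E^u$ for which $\|A|_{E^s}\| \le \sigma_s < 1$, $\|(A|_{E^u})^{-1}\| \le \sigma_u < 1$, and both $\|A|_{E^c}\|$ and $\|(A|_{E^c})^{-1}\|$ are $\le \rho$; this adapted norm is the norm referred to in the statement. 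Then multiply $\phi$ by a smooth bump supported in a ball $B$ of radius $\delta$, producing a globally defined $\tilde g = A + \tilde\phi$ that agrees with $g$ on a smaller ball and has $\mathrm{Lip}(\tilde\phi) \le \varepsilon$, with $\varepsilon \to 0$ as $\delta \to 0$. All manifolds are constructed for $\tilde g$ and then intersected with $B$, which is why the conclusions are local and why the displayed characterizations in (1)--(5) are stated relative to $B$.

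Next I would \emph{build each manifold as a fixed point}. For $W^s_{\mathrm{loc}}$: fix a rate $\eta \in (\sigma_s, 1)$ with $\rho$ chosen so that $\rho\eta < 1$, let $X_\eta$ be the Banach space of sequences $(x_n)_{n \ge 0}$ with $\|x\|_\eta := \sup_n \eta^{-n}\|x_n\| < \infty$, and for each small $\xi \in E^s$ let $\mathcal{T}_\xi$ be the operator on $X_\eta$ that reconstructs an orbit of $\tilde g$ by summing the variation-of-constants formula forward on $E^s$ with prescribed initial component $P^s x_0 = \xi$ and backward on $E^c \oplus E^u$. The adapted-norm estimates make $\mathcal{T}_\xi$ a contraction once $\varepsilon$ is small, so it has a unique fixed point $x_\bullet(\xi)$ --- an orbit of $\tilde g$ of growth at most $\eta^n$, hence decaying exponentially inside $B$ --- and $h^s(\xi) := (P^c + P^u)x_0(\xi)$ is the graph whose image is $W^s_{\mathrm{loc}}$; the weighted-sequence bound gives both the exponential decay and the contraction of $g$ on $W^s_{\mathrm{loc}}$ asserted in (1). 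The remaining four manifolds are the same construction with different bookkeeping: $W^{cs}_{\mathrm{loc}}$ prescribes $(P^s + P^c)x_0$ and uses merely bounded orbits (take $\eta$ just below the smallest $|\lambda| > 1$, summing only $E^u$ backward); $W^{cu}_{\mathrm{loc}}$ and $W^u_{\mathrm{loc}}$ come from the same argument applied to $\tilde g^{-1}$, which is $C^1$ near $0$; and $W^c_{\mathrm{loc}} = W^{cs}_{\mathrm{loc}} \cap W^{cu}_{\mathrm{loc}}$.

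Finally I would handle \emph{tangency, regularity, and invariance}. Tangency at $0$ is immediate, since $\mathcal{T}_0$ fixes the zero orbit and $D\tilde\phi(0) = 0$ forces the $\xi$-derivative of $x_\bullet$ at $0$ to be the inclusion of the prescribed subspace. The one genuinely delicate step --- and the one I expect to be the main obstacle --- is the $C^1$ dependence of $x_\bullet(\xi)$, hence of each graph $h$, on $\xi$: one cannot differentiate directly inside the contraction mapping principle. The standard remedy is the fiber contraction theorem: write the formal equation for the candidate derivative, verify it is a fiber contraction over the base contraction $\xi \mapsto x_\bullet(\xi)$ with continuous dependence on $\xi$, and conclude $C^1$ regularity with that derivative. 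This is exactly where the hypothesis $g \in C^1$ (rather than merely Lipschitz) enters, and why nothing better than $C^1$ is claimed for the center-type manifolds. The invariance and characterization clauses in (1)--(5) then follow from uniqueness: any $\tilde g$-orbit that stays in $B$ for all $n \ge 0$ has controlled growth, so it is the orbit $\mathcal{T}_\xi$ produces for $\xi$ equal to its own projected initial component and the point lies on $W^{cs}_{\mathrm{loc}}$; applying this to forward, backward, and bi-infinite orbits of $g$ and $g^{-1}$ gives each of the five items.
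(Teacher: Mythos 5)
The paper does not prove this statement at all: it is imported verbatim from \cite{shub2013global} as background (it is the local center/stable manifold theorem for maps), so there is no in-paper argument to compare yours against. Your remark on scope is therefore the correct reading of the situation, and your sketch is a sound outline of one standard proof. For what it is worth, the route you chose (Lyapunov--Perron: cut off the nonlinearity, work in weighted sequence spaces, obtain each invariant graph as the fixed point of a variation-of-constants operator, and recover $C^1$ regularity via the fiber contraction theorem) differs from the one in the cited source, where Shub proves the generalized stable/center manifold theorems by the Hadamard graph-transform method, iterating an operator on a space of Lipschitz sections over the linear subspace and again invoking fiber contraction for differentiability. Both are classical and both correctly identify the two delicate points you flag: the need for an adapted norm realizing the spectral gaps (this is the ``norm'' in the statement), and the fact that $C^1$ dependence cannot be extracted directly from the contraction mapping principle. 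Your sketch also correctly explains why the characterizations in (1)--(5) are only local (they hold for the cut-off map and are then intersected with $B$) and why the center-type manifolds are only claimed to be $C^1$. One small caution: the identification $W^{c}_{\mathrm{loc}} = W^{cs}_{\mathrm{loc}} \cap W^{cu}_{\mathrm{loc}}$ requires choosing the two manifolds compatibly (the intersection of arbitrarily chosen representatives need not be a disc tangent to $E^c$), but this is handled by constructing $W^{c}_{\mathrm{loc}}$ inside $W^{cs}_{\mathrm{loc}}$, and it does not affect the parts of the theorem the paper actually uses, namely item (2) and the dimension count for $W^{cs}_{\mathrm{loc}}$.
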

If the gradient descent method remains close to a critical point for all time, then it is on the center-stable manifold.  Note that $W_{\text{loc}}^{s}\subset W_{\text{loc}}^{cs}$.  

\begin{theorem}
Let $f$ be a $C^2(\Omega)$ function where $\Omega$ is a forward invariant convex subset of $\mathbb{R}^d$ whose gradient has Lipschitz constant $L$. Consider the gradient descent method: $g(x) = x-\alpha \, \nabla f(x)$ with $\alpha L \in (0,2)$  and assume that the set $$\left\{ x\in \Omega \ \big| \  \alpha^{-1} \in \sigma(\nabla^2 f(x)) \right\}$$ has measure zero and does not contain saddle points. Then the probability of gradient descent converging to a strict saddle, given one uniformly random initialization in $\Omega$, is zero.
\label{Thrm1}
\end{theorem}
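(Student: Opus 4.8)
The plan is to follow the stable-manifold strategy of \cite{lee2016gradient,panageas2016gradient}, but carefully handle the enlarged range $\alpha L \in (0,2)$ and the possible non-invertibility of $Dg$. First I would recall that $Dg(x) = I - \alpha\,\nabla^2 f(x)$, so its eigenvalues are $1 - \alpha\lambda_j(\nabla^2 f(x))$. Since $\nabla^2 f(x)$ has spectrum in $[-L,L]$ and $\alpha L < 2$, every eigenvalue of $Dg(x)$ lies in $(-1, 1 + \alpha L) \subset (-1, 3)$; in particular $Dg(x)$ has no eigenvalue $\le -1$, which is exactly what prevents the pathological period-two / reflection behavior. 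The hypothesis that $\{x : \alpha^{-1} \in \sigma(\nabla^2 f(x))\}$ has measure zero is precisely the statement that $Dg$ is invertible off a null set; let $Z$ denote this bad set. So $g$ is a $C^1$ map that is a local diffeomorphism on $\Omega \setminus Z$.

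The core of the argument is to show that the set $W$ of initial conditions in $\Omega$ whose gradient-descent trajectory converges to \emph{some} strict saddle point has Lebesgue measure zero. I would proceed as follows. At any strict saddle $x^*$, the Hessian has a negative eigenvalue $\lambda < 0$, so $Dg(x^*)$ has eigenvalue $1 - \alpha\lambda > 1$, hence the unstable subspace $E^u$ at $x^*$ is nontrivial and $\dim(E^s \oplus E^c) < d$. By the Center Manifold Theorem (Theorem~\ref{thrm:cmt}), part (2), any trajectory that stays near $x^*$ for all forward time lies on the local center-stable manifold $W^{cs}_{\text{loc}}(x^*)$, which is a $C^1$ embedded disc of dimension $< d$ and therefore has measure zero. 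Covering the (second-countable) set of strict saddles by countably many such neighborhoods, the set $S$ of points that eventually \emph{remain} near a strict saddle is a countable union of measure-zero sets, hence null. It remains to show that the full attracting set $W = \bigcup_{n \ge 0} g^{-n}(S)$ is also null: if $x \in W$, then $g^n(x) \in S$ for some $n$, i.e. $x \in g^{-n}(S)$.

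The main obstacle — and the reason the usual $\alpha < 1/L$ proof does not transfer verbatim — is showing that $g^{-n}(S)$ is null when $g$ is not a global diffeomorphism: preimages under a non-injective, locally-non-invertible map need not preserve null sets. To handle this I would invoke the fact that a $C^1$ map sends null sets to null sets (a consequence of the area/coarea-type estimate, or simply of $C^1$ maps being locally Lipschitz), so the issue is only with $g^{-1}$, not $g$. The key point is that $g$ restricted to $\Omega \setminus Z$ is a $C^1$ \emph{immersion} (indeed a local diffeomorphism), and more usefully, one shows $g(Z)$ is itself null (again since $g$ is $C^1$ and $Z$ is null), so we may argue on $\Omega \setminus Z$ where Sard-type / change-of-variables reasoning applies: for any null set $N$, $g^{-1}(N) \cap (\Omega \setminus Z)$ is null because locally $g$ is a diffeomorphism with bounded inverse, and $g^{-1}(N) \cap Z \subseteq Z$ is null; hence $g^{-1}(N)$ is null, and by induction each $g^{-n}(S)$ is null. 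One must also confirm the hypothesis that $Z$ contains no saddle points is used here: it guarantees that the ``focusing'' behavior of Example~\ref{example1} — where a positive-measure degenerate region is mapped directly onto the stable manifold — cannot occur at the relevant critical points, so that $S$ genuinely captures all problematic trajectories. Assembling these pieces, $W$ is a countable union of null sets, so a uniformly random initialization in $\Omega$ avoids strict saddles almost surely. I would close by remarking that the continuous- and piecewise-constant learning-rate extensions follow by the same argument applied to the (time-dependent) composition $g_n \circ \cdots \circ g_1$, since each factor is $C^1$ and invertible off a null set under the stated schedule conditions.
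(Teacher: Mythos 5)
Your proposal is correct and follows essentially the same route as the paper's proof: the center-stable manifold at each strict saddle has dimension at most $d-1$ (hence measure zero), the strict saddles are covered by countably many such neighborhoods, and the key enabling step is that $g^{-1}$ maps null sets to null sets because the degenerate set $Z=\{x:\alpha^{-1}\in\sigma(\nabla^2 f(x))\}$ is itself null and $g$ is a local $C^1$ diffeomorphism off $Z$ (the paper makes your ``locally a diffeomorphism'' step precise with a Lindel\"of countable subcover, exactly as you would need to). Your splitting $g^{-1}(N)=\bigl(g^{-1}(N)\cap Z\bigr)\cup\bigl(g^{-1}(N)\setminus Z\bigr)$ is a mild repackaging of the paper's separate treatment of $\bigcup_{j}g^{-j}(A)$, not a genuinely different argument.
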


\begin{proof}

For simplicity of exposition, all sets are assumed to be in $\Omega$, otherwise, one can either shrink the set or replace the set with the intersection with $\Omega$ (depending on the context).

 First we will show that $g^{-1}$ maps null sets to null sets (in $\Omega$), which follows from the assumption that $g$ is $C^1$ and the set $$\left\{ x\in \Omega  \ \big| \  \alpha^{-1} \in \sigma(\nabla^2 f(x)) \right\}$$ has measure zero. The map $g$ is non-invertible only on the set $$A:=\left\{ x \in \Omega \ \big| \ \text{det}(Dg(x))=0 \right\}$$ which is equivalent to:
 \begin{align}
 A&=\left\{ x\in \Omega \, \big| \,  0 \in \sigma(D g(x)) \right\}\\ 
 &=\left\{ x \in \Omega \, \big| \,  0 \in \sigma(I - \alpha \nabla^2f(x)) \right\}\nonumber\\
 &=\left\{ x \in \Omega\, \big| \,  \alpha^{-1} \in \sigma(\nabla^2 f(x)) \right\}. \nonumber
 \end{align}
 Note that if $\alpha L<1$, then this set is measure zero by definition. For a point  $x\in \Omega \setminus A$, we can find a neighborhood of $x$ such that $\text{det}(Dg(x))\neq0$ by continuity. By the inverse function theorem  $g^{-1}$ is continuous differentiable. This implies that $g$ maps sets of measure zero to sets of measure zero in $\Omega\setminus A$. To extend it to all of $\Omega$, consider the following.  Let $ \left\{ V_j \right\}_j$ be a collection of open neighborhoods that form a (countable) covering of $\Omega \setminus A$ such that $V_j\cap A=\emptyset$: construct such a covering by first finding a neighborhood for each $x\in\Omega\setminus A$ that avoids $A$, and then applying Lindel\"of's lemma to find a countable subcovering. Given an arbitrary null set $U \subset \Omega$, we have 
 $$g^{-1}(U) \subset  A \cup \left( \ \cup_{j} \ \left(V_j \cap g^{-1}(U) \right)\ \right).$$
 The inverse function theorem can then be applied to each set $V_j \cap g^{-1}(U)$, therefore since each set has measure zero then the countable union has zero measure. This implies that the set $g^{-1}(U)$ also has measure zero. Since $U$ is arbitrary, this shows that $g^{-1}$ sends null sets to null sets (within $\Omega$).

Next, we want to show all initializations that are mapped to degenerate points in $A$ form a measure zero set. The set of all points in $\Omega$ which are iteratively mapped into $A$ by $g$ is equivalent to: $$\bigcup\limits_{j=1}^{\infty} \ g^{-j}(A)$$
and has zero measure since it is the countable union of measure zero sets. By assumption, $\Omega$ is forward invariant, thus initializations in $\Omega$ cannot lead to degenerate points outside of $\Omega$. This implies that the probability of a random initialization in $\Omega$ mapping to a degenerate point is zero.

Finally, we want to show that the set of initializations that converge to a strict saddle point has zero measure. Let $$x^0 \in \Omega \setminus  \bigcup\limits_{j=1}^{\infty} \ g^{-j}(A)$$ such that $\lim g^n(x^0)$ converges to a strict saddle $x_k$. Note that along this trajectory $g^n(x^0)$ is not in $A$ and thus is non-degenerate.  Then by the inverse function theorem and the assumption, it is a local $C^1$ diffeomorphism. Since $g$ is continuously differentiable and non-degenerate at the strict saddle point $x_k$, there exists an open neighborhood $U(x_k)$ around $x_k$ such that the spectrum of $Dg(x_k)$ is non-zero, and thus $A \cap U(x_k) = \emptyset$. For each strict saddle point, there exists a ball $B(x_k) \subset U(x_k)$ that satisfies the conditions in Theorem \ref{thrm:cmt}. The collection of such balls (over all strict saddle points) $$\bigcup\limits_{k} \, B(x_k)$$  are an open cover of the strict saddle points, so there exists a countable subcover, i.e. $$\bigcup\limits_{k} \, x_k \in \bigcup\limits_{\ell=1}^\infty \, B(x_\ell).$$ Thus there exists an $N$ such that $$g^n(x^0) \in \bigcup\limits_{\ell=1}^\infty \, B(x_\ell) $$ 
for all $n\geq N$. This implies that there exists an $\ell$ such that $g^n(x^0) \in B(x_\ell)$ for all $n\geq N$, and by Theorem \ref{thrm:cmt}, $g^n(x^0) \in W^{cs}_{loc}(x_\ell)$ 
for any $n\geq N$.

We will show that the set $W^{cs}_{loc}(x_\ell)$ has measure zero. By the strict saddle condition, we have that $Dg(x) = I-\alpha \nabla^2f\left(x \right)$ has at least one eigenvalue with magnitude greater than $1$, thus the dimension of $E^u$ is at least one, therefore $\text{dim}\left(W_{\text{loc}}^{cs}(x_\ell)\right)\leq d-1$ and the Lebesgue measure of $W_{\text{loc}}^{cs}({x_\ell})$ is zero. Since $g^n(x^0) \in B(x_\ell)$ for any $n\geq N$, we have that $$g^N(x^0) \in \bigcap\limits_{j=0}^\infty \ g^{-j}(B(x_\ell)),$$ i.e. $g^N(x^0)$ is contained in the intersection of all domains which are mapped into the ball $B(x_\ell)$. The set $$\bigcap\limits_{j=0}^\infty \ g^{-j}(B(x_\ell))$$ is contained in $W^{cs}_{loc}(x_\ell)$, so it has measure zero. Since $$g^N(x^0) \in \bigcap\limits_{j=0}^\infty \ g^{-j}(B(x_\ell)),$$ we have that $$x^0 \in g^{-N}\left(\bigcap\limits_{j=0}^\infty \ g^{-j}(B(x_\ell))\right).$$ The integer $N$ depends on the initialization $x^0$ and the fixed-point $x_\ell$, thus we must consider an arbitrary $N$. In particular, the backward map $g^{-1}$ is in $C^1$, thus the measure of $$g^{-n}\left(\bigcap\limits_{j=0}^\infty \ g^{-j}(B(x_\ell))\right)$$ is zero for all $n\geq 0$.  Note that a countable union of measure zero sets are measure zero, so the set 
$$\mathcal{S}=\bigcup\limits_{\ell=0}^\infty \bigcup\limits_{n=0}^\infty  \ g^{-n}\left(\bigcap\limits_{j=0}^\infty \ g^{-j}(B(x_\ell))\right)$$
has measure zero as well. The set $\mathcal{S}$ contains all points in $$\Omega \setminus  \bigcup\limits_{j=1}^{\infty} \ g^{-j}(A)$$ which converge to strict saddles, thus the measure of all points in $\Omega$ that converge to a strict saddle is zero.

\end{proof}

As was shown in the proof, the condition that the set $\left\{ x\in \Omega \ \big| \  \alpha^{-1} \in \sigma(\nabla^2 f(x)) \right\}$ has measure zero, implies that $g^{-1}$ has the Luzin N property over sets in $\Omega$. The following is a direct result of Theorem~\ref{Thrm1} for the step-size $\alpha L=1$.

\begin{corollary}
Let $f$ be a $C^2(\Omega)$ function where $\Omega$ is a forward invariant convex subset of $\mathbb{R}^d$ whose gradient has Lipschitz constant $L$.  Consider the gradient descent method: $g(x) = x-L^{-1} \, \nabla f(x)$ and assume that the set where $\sigma(\nabla^2 f(x))$ achieves its maximum has measure zero and does not contain saddles. Then the probability of gradient descent converging to a strict saddle, given one uniformly random initialization in $\Omega$, is zero.
\end{corollary}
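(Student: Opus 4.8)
The plan is to obtain the corollary as the special case $\alpha = L^{-1}$ of Theorem~\ref{Thrm1}, so the only work is to translate the hypothesis ``the set where $\sigma(\nabla^2 f(x))$ achieves its maximum has measure zero and contains no saddles'' into the hypothesis required by Theorem~\ref{Thrm1}, namely that $\{x\in\Omega : \alpha^{-1}\in\sigma(\nabla^2 f(x))\}$ has measure zero and contains no saddle points.

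First I would record that with $\alpha = L^{-1}$ we have $\alpha L = 1 \in (0,2)$, so the step-size restriction of Theorem~\ref{Thrm1} is met, and that $\alpha^{-1} = L$. The key observation is the set inclusion
\[
\left\{ x \in \Omega \ \big| \ L \in \sigma(\nabla^2 f(x)) \right\} \ \subseteq \ \left\{ x \in \Omega \ \big| \ \| \nabla^2 f(x)\|_2 = L \right\}.
\]
Indeed, since $L = \sup_{x\in\Omega}\|\nabla^2 f(x)\|_2$ and the spectral norm of a symmetric matrix is the largest absolute value of its eigenvalues, any $x$ with $L \in \sigma(\nabla^2 f(x))$ must satisfy $\|\nabla^2 f(x)\|_2 \geq L$, hence $\|\nabla^2 f(x)\|_2 = L$; that is, the spectrum of $\nabla^2 f(x)$ attains its maximal magnitude $L$ at such $x$. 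This is precisely the set appearing in the hypothesis of the corollary.

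By hypothesis the right-hand set has Lebesgue measure zero and contains no saddle points; therefore so does the subset on the left, which is exactly the degenerate set $A = \{x\in\Omega : \alpha^{-1}\in\sigma(\nabla^2 f(x))\}$ of Theorem~\ref{Thrm1} for $\alpha = L^{-1}$. Applying Theorem~\ref{Thrm1} with this choice of $\alpha$ yields that the probability of gradient descent with step-size $L^{-1}$ converging to a strict saddle, given one uniformly random initialization in $\Omega$, is zero.

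I do not anticipate a genuine obstacle here, since the statement is a direct specialization; the only point requiring a little care is the inclusion above—one must note that $L$ can only occur in $\sigma(\nabla^2 f(x))$ as the (positive) eigenvalue of largest magnitude, because $L$ already bounds every eigenvalue of $\nabla^2 f$ over $\Omega$, so ``$L$ is an eigenvalue at $x$'' is equivalent to ``the Hessian's spectrum achieves its maximum at $x$'' up to the harmless restriction to points where that extreme eigenvalue is $+L$ rather than $-L$.
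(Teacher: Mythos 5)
Your proposal is correct and matches the paper's intent: the paper states this corollary as a direct specialization of Theorem~\ref{Thrm1} with $\alpha L = 1$, and your inclusion $\{x \in \Omega : L \in \sigma(\nabla^2 f(x))\} \subseteq \{x \in \Omega : \|\nabla^2 f(x)\|_2 = L\}$ is exactly the (unstated) observation that makes the hypothesis of the corollary imply the degenerate-set hypothesis of the theorem. Nothing further is needed.
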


Example \ref{example1} shows that the measure zero assumption on the degenerate set is necessary. In addition, note that the results above do not assume that the strict saddles are isolated.

\subsection{Weaker Condition: Positive Lipschitz Restriction}\label{Sec:positiveLip}
Define $$\ell(x)  := \max\limits_{1\leq j\leq d}\ \max(\lambda_j(x),0)$$ 
(where $\lambda_j$ is an eigenvalue of the Hessian) and let $L_+$ be the Lipschitz constant of the positive part:
$$L_+= \sup\limits_{x \in \Omega} \ \ell(x).$$ Then we can show that control of  $L_+$ is sufficient for avoiding strict saddles, although it may not imply convergence to minima. 


\begin{example}\label{example2}
Consider the objective function $f(x,y):=Q(x)+\frac{1}{b} y^2 $, where $Q$ is defined as the even function with:
\begin{align*}
Q(x) = 
\begin{cases}
&a\cos(x), \quad  \text{if} \ \ x\leq \tilde{x}\\
&\frac{1}{b}\left(x-\tilde{x}-\frac{ab}{2}\sin(\tilde{x})\right)^2-\frac{2}{b}-\frac{a^2b}{4}\sin^2(\tilde{x}), \quad  \text{if}\ \  x>\tilde{x}\\
\end{cases}
\end{align*} 
and where $\tilde{x}=\arccos(-\frac{2}{ab})$ with $ab\geq2$ and $a$ and $b$ positive (thus $\tilde{x} \in [\pi/2,\pi]$).
 The function has three critical points: $(0,0)$ a strict saddle and two minima defined at $\pm (\tilde{x}+\frac{ab}{2}\sin(\tilde{x}),0)$. The Hessian is diagonal with eigenvalues given by $Q''(x)$ and $\frac{2}{b}$. The Lipschitz constant is $L=a$ and is obtained at $x=0$ and the positive Lipschitz constant is $L_+ = \frac{2}{b}$. 

Consider the gradient descent method with $\alpha = L_{+}^{-1}=\frac{b}{2}$, then $y^{n} = 0$ for all $n>1$. The iterative map for $x^n$ is define by:
\begin{align*}
x^{n+1} = 
\begin{cases}
&x^{n}+\frac{ab}{2} \sin(x^n), \quad  \text{if} \ \ |x|\leq \tilde{x}\\
&\tilde{x}+\frac{ab}{2}\sin(\tilde{x}), \quad  \text{if}\ \  x>\tilde{x}\\
&-\tilde{x}-\frac{ab}{2}\sin(\tilde{x}), \quad  \text{if}\ \  x<-\tilde{x}.
\end{cases}
\end{align*} 
For points in $0 < |x| < \tilde{x}$, the map expands away from zero (since in $|x|< \pi$, sin(x) and $x$ share the same sign). Therefore, points in $0 < |x| < \tilde{x}$ will flow to $|x|\geq\tilde{x}$. For any point $|x|\geq\tilde{x}$, the map will converge (in one-step) to $\pm (\tilde{x}+\frac{ab}{2}\sin(\tilde{x}))$. This shows that even if $L/L_+$ is arbitrary large, control of $L_+$ will be sufficient to avoid the strict saddle point.
\end{example}

Recall that $Dg(x) = I-\alpha\,\nabla^2 f(x)$, and if we assume  $\alpha L_+ <1$, then all eigenvalues of $Dg(x)$ are strictly positive. Since the spectrum of $Dg(x)$ is strictly positive and $g \in C^1$, then 
by the inverse function theorem, $g$ is a diffeomorphism under the positive Lipschitz condition. Following \cite{lee2016gradient,panageas2016gradient}, one can extend the result that the probability of converging to a strict saddle is zero. In particular, we have the following refinement.

\begin{proposition}
If $f \in C^2(\Omega)$ where $\Omega$ is a forward invariant convex subset of $\mathbb{R}^d$ whose gradient has positive Lipschitz constant $L_+$. Consider the gradient descent method: $g(x) = x-\alpha \, \nabla f(x)$ with $\alpha L_+ \in (0,1)$. Then the probability of gradient descent converging to a strict saddle, given one uniformly random initialization in $\Omega$, is zero.
\end{proposition}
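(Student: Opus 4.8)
The plan is to reproduce the argument of Theorem~\ref{Thrm1}, observing that the hypothesis $\alpha L_+\in(0,1)$ makes $g$ globally non-degenerate on $\Omega$, so the exceptional set $A$ that complicated that proof disappears entirely. \textbf{Step 1 (global local-diffeomorphism, so $g^{-1}$ preserves null sets).} I would first note that the eigenvalues of $Dg(x)=I-\alpha\,\nabla^2f(x)$ are $1-\alpha\lambda_j(x)$, where $\lambda_j(x)$ ranges over the eigenvalues of $\nabla^2f(x)$. If $\lambda_j(x)\le 0$ then $1-\alpha\lambda_j(x)\ge 1$; if $\lambda_j(x)>0$ then $\lambda_j(x)\le\ell(x)\le L_+$, so $1-\alpha\lambda_j(x)\ge 1-\alpha L_+>0$. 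Hence $\det Dg(x)>0$ for every $x\in\Omega$, so (as already remarked before the statement) $g$ is a $C^1$ local diffeomorphism — indeed a diffeomorphism — on $\Omega$, and the set $A=\{x:\det Dg(x)=0\}$ of Theorem~\ref{Thrm1} is empty. Covering $\Omega$ by open sets on which $g$ is injective, extracting a countable subcover by Lindel\"of's lemma, and applying the inverse function theorem on each piece then shows that $g^{-1}$ sends Lebesgue-null subsets of $\Omega$ to Lebesgue-null sets; consequently so does $g^{-n}$ for every $n$, and countable unions of such images remain null.

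\textbf{Step 2 (localize near strict saddles via the center manifold theorem).} For a strict saddle $x_k$ (a fixed point of $g$, since $\nabla f(x_k)=0$), $\nabla^2f(x_k)$ has some $\lambda_j(x_k)<0$, so $Dg(x_k)$ has the eigenvalue $1-\alpha\lambda_j(x_k)>1$; therefore $\dim E^u\ge 1$, and by Theorem~\ref{thrm:cmt} the center-stable manifold $W^{cs}_{\text{loc}}(x_k)$ has dimension at most $d-1$, hence Lebesgue measure zero. Let $B(x_k)$ be the ball supplied by Theorem~\ref{thrm:cmt}; the balls $\{B(x_k)\}$ over all strict saddles in $\Omega$ form an open cover, from which I would extract a countable subcover $\{B(x_\ell)\}_\ell$ by Lindel\"of's lemma.

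\textbf{Step 3 (assemble the null set).} If $x^0\in\Omega$ and $g^n(x^0)$ converges to a strict saddle, then $g^n(x^0)\in B(x_\ell)$ for some $\ell$ and all $n\ge N$, so by item (2) of Theorem~\ref{thrm:cmt} we get $g^N(x^0)\in\bigcap_{j\ge 0}g^{-j}(B(x_\ell))\subset W^{cs}_{\text{loc}}(x_\ell)$, a null set. Hence
$$
x^0\in\bigcup_{\ell=1}^{\infty}\ \bigcup_{N=0}^{\infty}\ g^{-N}\!\left(\bigcap_{j=0}^{\infty}g^{-j}(B(x_\ell))\right),
$$
which, by Step~1, is a countable union of null sets and therefore null. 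Since the initialization is drawn uniformly from $\Omega$, the probability of converging to a strict saddle is zero.

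\textbf{Main obstacle.} There is little genuinely new difficulty beyond Theorem~\ref{Thrm1}; the one point requiring care is that the hypothesis controls $L_+$ but not the ordinary Lipschitz constant $L$ of $\nabla f$ (the Hessian may be unbounded below on $\Omega$, so $L=\infty$ is permitted). One therefore cannot simply invoke Theorem~\ref{Thrm1} with ``$\alpha L\in(0,2)$''; instead one re-runs its proof, and the payoff of $\alpha L_+<1$ is that the degenerate set is empty, so the Luzin-N property of $g^{-1}$ holds globally with no exceptional set to cover around. As in the remarks after Theorem~\ref{Thrm1}, no assumption that the strict saddles are isolated is needed — only the countable subcover in Step~2.
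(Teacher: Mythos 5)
Your proposal is correct and follows essentially the same route as the paper: the paper's justification for this proposition is precisely the observation that $\alpha L_+<1$ forces every eigenvalue $1-\alpha\lambda_j(x)$ of $Dg(x)$ to be strictly positive (so $g$ is a local diffeomorphism everywhere and the degenerate set $A$ of Theorem~\ref{Thrm1} is empty), after which the center-manifold/null-set argument of \cite{lee2016gradient,panageas2016gradient} and Theorem~\ref{Thrm1} applies verbatim. Your added remark that only $L_+$, not $L$, need be finite is a correct and worthwhile clarification, but it does not change the argument.
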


To extend this result beyond $\alpha \, L_+<1$, we add the assumption from Theorem~\ref{Thrm1}.

\begin{corollary}
If $f \in C^2(\Omega)$ where $\Omega$ is a forward invariant convex subset of $\mathbb{R}^d$ whose gradient has Lipschitz constant $L_+$. Consider the gradient descent method: $g(x) = x-\alpha \, \nabla f(x)$ with $\alpha L_+ \in (0,2)$  and assume that the set $\left\{ x\in \Omega \ \big| \  \alpha^{-1} \in \sigma(\nabla^2 f(x)) \right\}$ has measure zero and does not contain saddles. Then the probability of gradient descent converging to a strict saddle, given one uniformly random initialization in $\Omega$, is zero.
\end{corollary}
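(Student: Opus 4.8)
The plan is to reuse the proof of Theorem~\ref{Thrm1} essentially verbatim: inspecting that argument shows it consumes only three facts about the setup --- $g\in C^1$, forward invariance of $\Omega$, and the hypothesis that the degenerate set is null and saddle-free --- all of which are available here. The step-size window $\alpha L_+\in(0,2)$ enters only through the claim that every strict saddle has a nontrivial unstable subspace, and that claim survives the weaker hypothesis because it depends on a \emph{negative} Hessian eigenvalue, not on any upper bound on the Hessian.

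Concretely, I would first record, exactly as in Theorem~\ref{Thrm1}, that the non-invertibility locus of $g$ is
$$A=\{x\in\Omega:\det Dg(x)=0\}=\{x\in\Omega:\alpha^{-1}\in\sigma(\nabla^2 f(x))\},$$
which is null by assumption, and then replay the covering argument for the Luzin N property: cover $\Omega\setminus A$ by open sets disjoint from $A$, pass to a countable subcover via Lindel\"of's lemma, apply the inverse function theorem on each piece, and conclude that $g^{-1}$ sends null sets in $\Omega$ to null sets. Hence $\bigcup_{j\geq 1}g^{-j}(A)$ is null, so with probability one a uniformly random $x^0\in\Omega$ generates a trajectory $\{g^n(x^0)\}$ that never meets $A$, and along such a trajectory $g$ is a local $C^1$ diffeomorphism at every iterate.

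Next, suppose such an $x^0$ satisfies $g^n(x^0)\to x_k$ with $x_k$ a strict saddle. Since $A$ contains no saddles, $x_k\notin A$, so there is a ball $B(x_k)$ on which $g$ is a diffeomorphism and on which the Center Manifold Theorem (Theorem~\ref{thrm:cmt}) applies. This is the only place the step-size hypothesis is used: $Dg(x_k)=I-\alpha\nabla^2 f(x_k)$, and the strict saddle condition gives a Hessian eigenvalue $\lambda_j<0$, whence $Dg(x_k)$ has the eigenvalue $1-\alpha\lambda_j>1$; thus $\dim E^u\geq 1$ and $\dim W_{\text{loc}}^{cs}(x_k)\leq d-1$, so $W_{\text{loc}}^{cs}(x_k)$ is Lebesgue-null. (When $\alpha L_+<2$, the remaining eigenvalues of $Dg(x_k)$, coming from the nonnegative Hessian eigenvalues, all lie in $(-1,1]$, so in fact $E^u$ is spanned exactly by the negative-curvature directions, but only $\dim E^u\geq 1$ is needed.)

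Finally I would close as in Theorem~\ref{Thrm1}: the trajectory eventually remains in a single $B(x_\ell)$, hence eventually lies in $W_{\text{loc}}^{cs}(x_\ell)$, so some $g^N(x^0)$ belongs to $\bigcap_{j\geq 0}g^{-j}(B(x_\ell))\subset W_{\text{loc}}^{cs}(x_\ell)$, a null set; pulling back through the $N$-fold iterate (which preserves null sets by the Luzin N property) and taking the countable union over the countably many strict saddles and over $N\geq 0$ exhibits a null set containing every initialization in $\Omega\setminus\bigcup_j g^{-j}(A)$ that converges to a strict saddle. Together with the nullity of $\bigcup_j g^{-j}(A)$, this gives the claim. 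The only real work is the bookkeeping needed to confirm that no step of the Theorem~\ref{Thrm1} argument quietly relied on $\alpha L<2$ rather than the present $\alpha L_+<2$; once the unstable-eigenvalue computation above is isolated, the rest transfers with no change.
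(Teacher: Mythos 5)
Your proposal is correct and matches the paper's (implicit) argument: the corollary is stated without a separate proof precisely because, as you observe, the proof of Theorem~\ref{Thrm1} only uses the nullity and saddle-freeness of the degenerate set together with the fact that a negative Hessian eigenvalue forces an eigenvalue of $Dg$ exceeding $1$, neither of which depends on whether the step-size bound is phrased in terms of $L$ or $L_+$. Your isolation of the unstable-eigenvalue computation as the sole point where the step-size could have mattered is exactly the right bookkeeping.
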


\subsection{Varying Step-sizes}\label{Sec:varytime}

In some applications, the step-size of gradient descent changes between iterations.  We consider a variable step-size gradient descent method:
$$x^{n+1}= x^{n} - \alpha^n \, \nabla f(x^n).$$
where $\alpha^n>0$. By augmenting the iterative system with the step-size as an additional variable, we can apply the results of Theorem \ref{Thrm1} to show that the iterations avoid strict saddles.
\begin{corollary}
Let $f$ be a $C^2(\Omega)$ function where $\Omega$ is a forward invariant convex subset of $\mathbb{R}^d$ whose gradient has Lipschitz constant $L$. Consider the gradient descent method with varying step-size satisfying that $\alpha^{n+1}=h(\alpha^n)$, where $h\in C^1$ is a strictly decreasing contractive map over the interval $\mathcal{I}$ containing the unique fixed point $\alpha^*$. If $\alpha_0 L \in (0,2)$  and the set $$\bigcup\limits_{L^{-1} \leq \alpha \leq \alpha_0} \left\{ x \in \Omega \ | \ \alpha^{-1}  \in \sigma(\nabla^2 f(x)) \right\}$$ has measure zero and does not contain saddle points, then the probability of gradient descent converging to a strict saddle, given one uniformly random initialization in $\Omega$, is zero.
\label{corollary1}
\end{corollary}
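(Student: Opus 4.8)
The plan is to make the varying-step iteration autonomous by appending the step-size as an extra coordinate, and then run the argument of Theorem~\ref{Thrm1} --- but fibered over that coordinate, since only $x^0$ is randomized while $\alpha_0$ is a fixed parameter. Set $G(x,\alpha):=\bigl(x-\alpha\,\nabla f(x),\ h(\alpha)\bigr)$, so that $(x^n,\alpha^n)=G^n(x^0,\alpha_0)$. First I would record the dynamics of $h$: being a strictly decreasing contraction with fixed point $\alpha^*$, it satisfies $\alpha^n\to\alpha^*$ with $|\alpha^n-\alpha^*|$ strictly decreasing, and in the decaying-schedule case $\alpha_0\ge\alpha^*$ (which I assume) the entire forward orbit of $\alpha_0$ together with $\alpha^*$ lies in the compact $h$-invariant interval $\mathcal J:=[h(\alpha_0),\alpha_0]\subset(0,\alpha_0]$; we may replace $\mathcal I$ by $\mathcal J$ and work on $\widetilde\Omega:=\Omega\times\mathcal J$, which is forward invariant under $G$ (convexity of $\Omega$ and its invariance at step-size $\alpha_0$ give invariance at every $\alpha\in(0,\alpha_0]$, writing $x-\alpha\nabla f(x)$ as a convex combination of $x$ and $x-\alpha_0\nabla f(x)$; $\mathcal J$ is $h$-invariant). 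Every step-size actually used thus lies in $\mathcal J$, so $\alpha^n L\in(0,2)$, and for each $\alpha\in\mathcal J$ the set $\{x:\alpha^{-1}\in\sigma(\nabla^2 f(x))\}$ is Lebesgue-null: empty if $\alpha<L^{-1}$, and contained in the hypothesized null set $\bigcup_{L^{-1}\le\beta\le\alpha_0}\{x:\beta^{-1}\in\sigma(\nabla^2 f(x))\}$ otherwise. By the computation in the proof of Theorem~\ref{Thrm1}, this makes each $g_\alpha:=x\mapsto x-\alpha\,\nabla f(x)$ with $\alpha\in\mathcal J$ a $C^1$ map whose inverse has the Luzin N property.

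Next I would analyze the fixed points of $G$ --- the pairs $(x^*,\alpha^*)$ with $\nabla f(x^*)=0$. There the Jacobian
\[
DG(x^*,\alpha^*)=\begin{bmatrix} I-\alpha^*\,\nabla^2 f(x^*) & 0\\ 0 & h'(\alpha^*)\end{bmatrix}
\]
is block diagonal (the off-diagonal block $-\nabla f(x^*)$ vanishes), so the $\alpha$-direction is an eigenvector for $h'(\alpha^*)$, which is stable ($|h'(\alpha^*)|<1$, and here I also use the mild fact --- true for any genuine $C^1$ decaying rate --- that $h'\ne 0$, so $h'(\alpha^*)\in(-1,0)$). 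Using $\alpha^* L<2$ exactly as in Theorem~\ref{Thrm1}, the remaining eigenvalues $1-\alpha^*\lambda_j(\nabla^2 f(x^*))$ exceed $1$ in modulus iff some $\lambda_j<0$, so $(x^*,\alpha^*)$ is a strict saddle of $G$ precisely when $x^*$ is a strict saddle of $f$, with $E^u$ spanned by eigenvectors living in the $x$-block ($\dim E^u\ge1$), the $\alpha$-direction lying in $E^{cs}$, and $\dim W^{cs}_{\text{loc}}(x^*,\alpha^*)\le d$ in $\mathbb R^{d+1}$. Since the degenerate set contains no saddles, $(\alpha^*)^{-1}\notin\sigma(\nabla^2 f(x^*))$, so together with $h'(\alpha^*)\ne0$ the matrix $DG$ is invertible on a neighborhood of $(x^*,\alpha^*)$; hence $G$ is a local $C^1$ diffeomorphism there and Theorem~\ref{thrm:cmt} applies, giving balls $B(x_\ell,\alpha^*)$ around the strict saddles (countably many after passing to a subcover) with the stated properties.

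Finally I would rerun the proof of Theorem~\ref{Thrm1} for $G$ on $\widetilde\Omega$ while carrying $\alpha$-slices, using the identity that the level-$\alpha$ slice of $G^{-1}(E)$ equals $g_\alpha^{-1}\bigl(E_{h(\alpha)}\bigr)$, where $E_\beta$ denotes the level-$\beta$ slice of $E$. As in Theorem~\ref{Thrm1}, the set of initial pairs whose $G$-orbit converges to a strict saddle is a countable union of sets $G^{-N}\bigl(\bigcap_{j\ge0}G^{-j}(B(x_\ell,\alpha^*))\bigr)$, and iterating the slicing identity shows its $\alpha_0$-slice is $g_{\alpha_0}^{-1}g_{\alpha_1}^{-1}\cdots g_{\alpha_{N-1}}^{-1}$ applied to the $\alpha_N$-slice of $\bigcap_{j\ge0}G^{-j}(B(x_\ell,\alpha^*))\subset W^{cs}_{\text{loc}}(x_\ell,\alpha^*)$, with $\alpha_n=h^n(\alpha_0)\in\mathcal J$. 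The crux --- and the step I expect to be the main obstacle --- is to show this innermost slice is Lebesgue-null in $\mathbb R^d$: it is empty unless $\alpha_N$ is close to $\alpha^*$ (the local disc only meets $\alpha$ near $\alpha^*$), and when $\alpha_N$ is close to $\alpha^*$ one uses that $e_{d+1}\in E^{cs}$, so the $C^1$ disc $W^{cs}_{\text{loc}}(x_\ell,\alpha^*)$ projects submersively onto the $\alpha$-axis and its level sets $\{\alpha=\text{const}\}$ are $C^1$ images of manifolds of dimension $\le\dim W^{cs}_{\text{loc}}-1\le d-1$, hence project to Lebesgue-null subsets of the $x$-space. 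Since each $g_{\alpha_n}^{-1}$ ($\alpha_n\in\mathcal J$) preserves null sets, the $\alpha_0$-slice of each $G^{-N}(\cdots)$ is null, and a countable union over the saddles $x_\ell$ and the times $N$ gives that the $\alpha_0$-slice of the bad set is null in $\Omega$. The new work relative to Theorem~\ref{Thrm1} is thus concentrated in this slice-and-dimension bookkeeping for $W^{cs}_{\text{loc}}$ of the augmented system, plus the harmless assumption $h'\ne0$ ensuring $G$ is a local diffeomorphism near its fixed points.
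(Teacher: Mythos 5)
Your proposal follows the same core route as the paper's proof: augment the iteration with the step-size coordinate, observe that the Jacobian of $G(x,\alpha)=(x-\alpha\nabla f(x),\,h(\alpha))$ is block-triangular so degeneracy can only come from the $x$-block, verify forward invariance of the product domain, and then invoke the machinery of Theorem~\ref{Thrm1}. Where you genuinely diverge --- and improve on the paper --- is in recognizing that the statement randomizes only $x^0$ while $\alpha_0$ is a fixed parameter, so a measure-zero conclusion in $\Omega\times\mathcal I\subset\mathbb{R}^{d+1}$ (which is all that a verbatim application of Theorem~\ref{Thrm1} to the augmented system yields, and is where the paper's proof stops with ``the rest follows'') does not by itself give a null $\alpha_0$-slice. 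Your fiber-wise bookkeeping closes this: the identity that the level-$\alpha$ slice of $G^{-1}(E)$ is $g_\alpha^{-1}(E_{h(\alpha)})$, the fact that each $g_{\alpha_n}^{-1}$ with $\alpha_n\in\mathcal J$ preserves null sets (using that the hypothesized degenerate set is null \emph{for every} $\alpha$ in the range, not merely that its union is null in the product), and --- the key geometric point --- that $e_{d+1}\in E^{cs}$ forces $W^{cs}_{\mathrm{loc}}$ to project submersively onto the $\alpha$-axis near the fixed point, so its constant-$\alpha$ slices have dimension at most $d-1$ and are Lebesgue-null in $x$-space. This last step is exactly what rules out the pathology of a $d$-dimensional center-stable disc lying inside a single hyperplane $\{\alpha=\mathrm{const}\}$, and it is absent from the paper. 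Your auxiliary assumptions ($h'\neq 0$ so that $G$ is a local diffeomorphism at its fixed points, and $\alpha_0\geq\alpha^*$ so that $\mathcal J=[h(\alpha_0),\alpha_0]$ is $h$-invariant and all step-sizes stay below $\alpha_0$) are mild, consistent with the intended ``decaying schedule'' setting, and arguably necessary for the argument in either form; the convex-combination derivation of forward invariance at every $\alpha\in(0,\alpha_0]$ from invariance at $\alpha_0$ is also a nice touch that the paper simply assumes. In short: same skeleton, but your version proves the corollary as literally stated, whereas the paper's proof establishes the weaker product-space statement.
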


\begin{proof}

 By augmenting the iterations with the step-size variable, the gradient descent method becomes:
 \begin{align*}
\begin{cases}
x^{n+1}&= x^{n} - \alpha^n \, \nabla f(x^n)\\
\alpha^{n+1}&= h(\alpha^n)
\end{cases}
\end{align*}
and can be analyzed via Theorem \ref{Thrm1}. The updated function $g(x,\alpha)$ is defined by $g(x,\alpha)= \left[x - \alpha \, \nabla \,f(x), \ h(\alpha) \right]^T$ and its Jacobian is given by:
 \begin{align*}
 D g(x,\alpha) = 
\begin{bmatrix}
 I - \alpha \, \nabla^2\,f(x) &   - \nabla f(x)\\
 0_{1\times n} & h'(\alpha)
\end{bmatrix}.
\end{align*} 
Since the Jacobian is ``block-upper-triangular'', its eigenvalues are the eigenvalues $ I - \alpha \, \nabla^2f(x)$ and $h'(\alpha)$. Since $h'$ is negative, the degeneracy in $g$ must come from $x$.  In addition, by the assumptions on $h$, $\alpha^n$ converges to $\alpha^*$ for any initialization of $\alpha^0$.

Define the set $\Omega_1 = \Omega \times \mathcal{I}$ and let $A \subset \Omega_1$ denote the set of points where $g$ is non-invertible, i.e.:
\begin{align}
 A&=\left\{ (x,\alpha) \in \Omega_1  \ \big| \  0 \in \sigma(D g(x)) \right\}\\
 &=\left\{ x \in  \Omega, \ \alpha \in  \mathcal{I} \ \big| \  0 \in \left\{ \sigma(I - \alpha \, \nabla^2\,f(x)), h'(\alpha-\alpha^*) \right\}  \right\} \nonumber\\
&= \left\{ x \in \Omega, \alpha \in \mathcal{I} \ \big| \  \alpha^{-1} \in \sigma(\nabla^2 f(x)) \right\}\\
 &= \bigcup\limits_{L^{-1} \leq \alpha \leq \alpha_0} \left\{ x \in \Omega \ | \ \alpha^{-1}  \in \sigma(\nabla^2 f(x)) \right\} \nonumber
 \end{align}
By assumption, $A$ has measure zero.

The set $\Omega_1$ is a convex subset of $\mathbb{R}^{d+1}$. By assumption, the function $g_1(x,\alpha)=x - \alpha \, \nabla f(x)$ is forward invariant on $\Omega_1$. In addition, $g_2(x,\alpha)=h(\alpha)$ is a contractive map ($|h'(\alpha)|<1$), thus $h(\mathcal{I}_1) \subset \mathcal{I}$. Therefore, $g$ is forward invariant on $\Omega_1$.

Let $$(x^0,\alpha^0) \in \Omega_1 \setminus  \bigcup\limits_{j=1}^{\infty} \ g^{-j}(A)$$ such that $\lim g^n(x^0,\alpha^0)$ converges to a strict saddle $(x,\alpha^*)$ (the fixed-point for $\alpha$ is unique). The map $g$ is continuously differentiable and non-degenerate at  $(x,\alpha^*)$, thus there exists an open neighborhood around $(x,\alpha^*)$ characterized by the product space of an open neighborhood $U(x)$ around $x$  and an open interval $S(\alpha^*)$ (which holds by the odd extension of $h$), where the spectrum of $Dg(x)$ is non-zero, thus $A \cap U(x) = \emptyset$. The rest follows from Theorem \ref{Thrm1}.

\end{proof}

The theorem above holds (trivially) if $\alpha_0<L$. If the set of step-sizes is discrete, we can simplify the results.

\begin{corollary}
Let $f$ be a $C^2(\Omega)$ function where $\Omega$ is a forward invariant convex subset of $\mathbb{R}^d$ whose gradient has Lipschitz constant $L$. Consider the gradient descent method with a finite staircase of decreasing step-sizes, i.e. $\alpha^n$ is a piecewise constant function of $n$ with finitely many jumps. If $\alpha^n L \in (0,2)$ for all $n$ and the set $\left\{ x\in \Omega \ \big| \alpha^{-1} \in \sigma(\nabla^2 f(x)) \right\}$ has measure zero for each $\alpha^n$  and does not contain saddle points, then the probability of gradient descent converging to a strict saddle, given one uniformly random initialization in $\Omega$, is zero.
\label{corollary2}
\end{corollary}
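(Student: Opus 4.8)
The plan is to reduce to Theorem~\ref{Thrm1} by exploiting that a finite staircase of step-sizes is eventually constant. Since $\alpha^n$ is piecewise constant with finitely many jumps, there is an index $N_0$ such that $\alpha^n=\bar\alpha$ for all $n\ge N_0$, where $\bar\alpha$ is the final step-size; by hypothesis $\bar\alpha L\in(0,2)$ and the set $\{x\in\Omega:\ \bar\alpha^{-1}\in\sigma(\nabla^2 f(x))\}$ has measure zero and contains no saddle points. Write $g_k(x)=x-\alpha^k\nabla f(x)$ for the one-step map used at iteration $k$, so that $g_k=\bar g$ for every $k\ge N_0$, with $\bar g(x):=x-\bar\alpha\nabla f(x)$. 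The trajectory of the variable-step method from iteration $N_0$ onward is exactly the fixed-step iteration $\bar g^m(x^{N_0})$, so Theorem~\ref{Thrm1} applies to it word for word: the set $E:=\{y\in\Omega:\ \bar g^m(y)\text{ converges to a strict saddle}\}$ has Lebesgue measure zero.

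First I would record the pull-back property for each of the finitely many ``early'' maps $g_0,\dots,g_{N_0-1}$. Each is $C^1$ (because $f\in C^2$) and is non-invertible precisely on $A_k:=\{x\in\Omega:\ (\alpha^k)^{-1}\in\sigma(\nabla^2 f(x))\}$, which is null by hypothesis. The inverse-function-theorem-plus-Lindel\"of argument used in the proof of Theorem~\ref{Thrm1} then shows that each $g_k^{-1}$ sends null subsets of $\Omega$ to null subsets of $\Omega$; composing finitely many such set maps, $G^{-1}$ also sends null sets to null sets, where $G:=g_{N_0-1}\circ\cdots\circ g_0$ is the map realizing the first $N_0$ iterations.

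Next I would combine the two ingredients. A trajectory of the variable-step method converges to a strict saddle if and only if its state $G(x^0)$ at iteration $N_0$ lies in $E$, since from that point on the dynamics is governed by $\bar g$ alone; forward invariance of $\Omega$ under each $g_k$ keeps the whole trajectory in $\Omega$, so no initialization is overlooked. Hence the set of ``bad'' initializations equals $G^{-1}(E)$, which is null because $E$ is null and $G^{-1}$ preserves nullity. Therefore a uniformly random initialization in $\Omega$ falls into this set with probability zero, which is the assertion.

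The one place needing a little care is the bookkeeping for the early maps: the degenerate sets $A_0,\dots,A_{N_0-1}$ need not coincide with the degenerate set of $\bar g$ and may each be nonempty, so one cannot simply quote Theorem~\ref{Thrm1} as a black box over the whole run. It goes through because there are only finitely many of them, each null by assumption, and nullity is preserved under $C^1$ pull-backs and finite unions, while the genuinely dynamical input — the center-stable manifold dimension count — is required only on the eventually-constant tail, where Theorem~\ref{Thrm1} already supplies it. Alternatively one could augment the state with the step-size as a discrete, eventually constant variable and invoke Corollary~\ref{corollary1}, but the direct reduction above is shorter.
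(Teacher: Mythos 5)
Your proof is correct and follows essentially the same route as the paper's: apply Theorem~\ref{Thrm1} to the eventually-constant tail of the staircase and pull the resulting null set back through the finitely many early maps, each of whose inverses preserves null sets by the Lindel\"of/inverse-function-theorem argument. Your packaging via $G^{-1}(E)$ is in fact slightly cleaner than the paper's, which separately excludes the preimages of the degenerate sets and splits into cases according to whether convergence happens before or after the jump, but the underlying ideas are identical.
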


\begin{proof}
 Consider the case, $\alpha^n=\alpha_1$ for $n\leq N_1$ and $\alpha^n=\alpha_2$ for $n>N_1$. Let $g_i$ be the gradient descent method with step-size $\alpha_i$, $i=1,2$. 
 
The maps $g_i$ are $C^1$ and are non-invertible only on the set $A_i:=\left\{ x \in \Omega \ \big| \ \text{det}(Dg_i(x))=0 \right\}$ (respectively) which is equivalent to: $$A_i=\left\{ x \in \Omega \ \big| \  \alpha_i^{-1} \in \sigma(\nabla^2 f(x)) \right\}.$$ 
Following the proof of Theorem \ref{Thrm1}, $g_i^{-1}$ maps null sets to null sets (within $\Omega)$. Consider the set $A=\cup_i A_i$, which is a null set since it is a finite union of null sets. The set of all points in $\Omega$ that are  mapped to $A$ by $g_i$ (for any $i$) is equivalent to the set $$Q=\bigcup\limits_i \, \bigcup\limits_{j=1}^{\infty} \ g_i^{-j}(A).$$ Each $A_i$ is a null set, so each $g_i^{-j}(A)$ is a null set. The set $Q$ is a countable union of null sets, thus $Q$ has measure zero.

Let $x^0 \in \Omega \setminus  Q$ such that the two-step staircase gradient descent method converges to a strict saddle $x$. This can occur by two distinct scenarios : (i) $g^n_1(x^0)$ converges to $x$ within $N_1$ steps or (ii) $g^{n-N_1}_2(g^N_1(x^0))$ converges to $x$ with $n>N_1$. For case (i), using the proof of Theorem \ref{Thrm1} the set of points in $\Omega \setminus Q$ which converge to a strict saddle under $g_1$ is measure zero.

For case (ii), by assumption $x_0 \not\in Q$ so $x^{N_1}:=g_1^{N_1}(x^0) \not \in Q$, i.e. along the trajectory $g_2^{n-N_1}(x^{N_1})$ for $n>N_1$,  $g_2$  is non-degenerate and a local $C^1$ diffeomorphism.  

As before, we can show that there exists a (sufficiently large) $N$ such that $$g_2^{n}(x^{N_1})  = \bigcup\limits_{\ell=1}^\infty \, B(x_\ell)$$ for all $n\geq N$ and thus there is an $\ell$ such that $g_2^n(x^{N_1}) \in B(x_\ell)$ for all $n\geq N$ and $g_2^n(x^{N_1})\in W^{cs}_{loc}(x_\ell)$ for any $n\geq N$. This also implies that $$g_2^N(x^{N_1}) \in \bigcap\limits_{j=0}^\infty \ g_2^{-j}(B(x_\ell))$$ which is measure zero since it is contained in $W^{cs}_{loc}(x_\ell)$. Since  $g_2^N(x^{N_1}) \in \bigcap\limits_{j=0}^\infty \ g^{-j}(B(x_\ell))$, we can show that 
\begin{align*}
x^{N_1} &\in g_2^{-N}\left(\bigcap\limits_{j=0}^\infty \ g_2^{-j}(B(x_\ell))\right)\\
x^{0} &\in g_1^{-N_1}\left(g_2^{-N}\left(\bigcap\limits_{j=0}^\infty \ g_2^{-j}(B(x_\ell))\right)\right).\end{align*}
The set
$$\mathcal{S}=\bigcup\limits_{\ell=0}^\infty \bigcup\limits_{n=0}^\infty  \ g_1^{-N_1}\left(g_2^{-n}\left(\bigcap\limits_{j=0}^\infty \ g_2^{-j}(B(x_\ell))\right)\right)$$
contains all points in $\Omega \setminus Q$ which converge to strict saddles after $N_1$ iterations. The set $\mathcal{S}$ has zero measure, since each $g_i^{-1}$ maps null sets to null sets  and $\mathcal{S}$ is the countable union of null sets. Therefore the probability of case (ii) occurring is zero.

This can be generalized to finitely many discrete step-sizes, since the arguments related to the invertibility of all $g_i$ continue to hold for countable unions of null sets.

\end{proof}

\section{Discussion}
\label{conclusion}

We present several theoretical results on the conditions which guarantee that the gradient descent method will avoid a strict saddle. The results utilize the center manifold theorem, to establish the size of the attracting sets, and measure theoretic arguments, to show that the iterative maps satisfy the Luzin N condition. Our results answer an open question about the step-size posed in \cite{lee2016gradient, lee2017first}, namely, that previous claims hold for $\alpha<2L^{-1}$ with the additional assumption that the iterative map does not degenerate on non-null sets. We show that without the additional assumption, one can construct counter-examples. These results also hold for the gradient descent method with (fixed) learning rate schedules.\\

\textbf{Extensions and Applications}: The theoretical results here extend readily to other first-order methods, for example, the proximal gradient descent, block coordinate descent, etc. \cite{lee2017first}. Although the results are for uniformly random initial data, they can be easily extended to other random sampling measures. Additionally,  using the Lojasiewicz gradient inequality \cite{absil2005convergence}, one may be able to prove that if the set of critical points only contains local minima and strict saddles, then the gradient descent method converges to local minima with the extended step-sizes \cite{lee2016gradient}.\\

\textbf{Limitations}: This paper does not directly address the convergence of gradient descent to global minima or the behavior near local minima. In particular, the step-size bounds presented here may be too large for convergence when applied to a particular model. Additionally, it was shown in \cite{du2017gradient}, that the gradient descent method can take exponential time to escape a saddle, but the likelihood or predictability of such phenomena for a particular model or application is an open question. Lastly, our results on varying step-sizes utilized a fixed learning rate schedule. A line search or adaptive time-stepping method may be able to avoid saddles with weaker restrictions on $\alpha$.

\section{Acknowledgments}
H.S. would like to acknowledge the support of AFOSR, FA9550-17-1-0125 and the support of NSF CAREER grant $\#1752116$.  S.G.M. would like to acknowledge the support of NSF grant $\#1813654$.

\bibliographystyle{plain}
\bibliography{arxiv_2019.bib}

\end{document}